\newtheorem{observation}{Observation}
\newcommand{\wvec}[1]{\overrightarrow{\text{#1}}}
\newcommand{\ignore}[1]{}
\title{Man is to Computer Programmer as Woman is to Homemaker?\\Debiasing Word Embeddings}
\author[1]{{\normalsize Tolga Bolukbasi}}
\author[2]{{\normalsize Kai-Wei Chang}}
\author[2]{{\normalsize James Zou}}
\author[1,2]{{\normalsize Venkatesh Saligrama}}
\author[2]{{\normalsize Adam Kalai}}
\affil[1]{{\small Boston University, 8 Saint Mary's Street, Boston, MA}}
\affil[2]{{\small Microsoft Research New England, 1 Memorial Drive, Cambridge, MA}}
\affil[ ]{{\small tolgab@bu.edu, kw@kwchang.net, jamesyzou@gmail.com, srv@bu.edu, adam.kalai@microsoft.com}\vspace{-0.8cm}}
\date{}
\begin{document}
\maketitle
\begin{abstract}
The blind application of machine learning runs the risk of amplifying biases present in data. Such a danger is facing us with {\em word embedding}, a popular framework to represent text data as vectors which has been used in many machine learning and natural language processing tasks. We show that even word embeddings trained on Google News articles exhibit female/male gender stereotypes to a disturbing extent. This raises concerns because their widespread use, as we describe, often tends to amplify these biases. Geometrically, gender bias is first shown to be captured by a direction in the word embedding. Second, gender neutral words are shown to be linearly separable from gender definition words in the word embedding. Using these properties, we provide a methodology for modifying an embedding to remove gender stereotypes, such as the association between between the words {\em receptionist} and {\em female}, while maintaining desired associations such as between the words {\em queen} and {\em female}. We define metrics to quantify both direct and indirect gender biases in embeddings, and develop algorithms to ``debias'' the embedding. Using crowd-worker evaluation as well as standard benchmarks, we empirically demonstrate that our algorithms significantly reduce gender bias in embeddings while preserving the its useful properties such as the ability to cluster related concepts and to solve analogy tasks. The resulting embeddings can be used in applications without amplifying gender bias.
\vspace{-0.05cm}
\end{abstract}
\section{Introduction}

There have been hundreds or thousands of papers written about word embeddings and their applications, from Web search \cite{NMCC16} to parsing Curriculum Vitae \cite{resumes2015}. However, none of these papers have recognized how blatantly sexist the embeddings are and hence risk introducing biases of various types into real-world systems. 

A word embedding that represent each word (or common phrase) $w$ as a $d$-dimensional {\em word vector} $\vec{w}\in \mathbb{R}^d$. Word embeddings, trained only on word co-occurrence in text corpora, serve as a dictionary of sorts for computer programs that would like to use word meaning. First, words with similar semantic meanings tend to have vectors that are close together. Second, the vector differences between words in embeddings have been shown to represent relationships between words ~\cite{rubenstein1965contextual, mikolov2013linguistic}. For example given an analogy puzzle, ``man is to king as woman is to $x$'' (denoted as \emph{man}:\emph{king} :: \emph{woman}:$x$), simple arithmetic of the embedding vectors finds that $x$=\emph{queen} is the best answer because:
\[
\overrightarrow{\text{man}}-\overrightarrow{\text{woman}}\approx \overrightarrow{\text{king}}-\overrightarrow{\text{queen}}
\]
 Similarly, $x$=\emph{Japan} is returned for \emph{Paris}:\emph{France} ::  
\emph{Tokyo}:$x$. It is surprising that a simple vector arithmetic can simultaneously capture a variety of relationships. It has also excited practitioners because such a tool could be useful across applications involving natural language. Indeed, they are being studied and used in a variety of downstream applications (e.g., document ranking~\cite{NMCC16}, sentiment analysis~\cite{IrsoyCardie14}, and question retrieval~\cite{LJBJTMM16}).  

\begin{figure}
\begin{tabular}{lllc}
\multicolumn{3}{c}{\bf{Extreme \emph{she} occupations}}\\ 
{1. homemaker} & {2. nurse} & {3. receptionist} \\
{4. librarian} & {5. socialite} & {6. hairdresser} \\
{7. nanny} & {8. bookkeeper} & {9. stylist} \\
{10. housekeeper} & {11. interior designer} & {12. guidance counselor} \\[2ex]
\multicolumn{3}{c}{{\bf Extreme \emph{he} occupations}}\\
{1. maestro} & {2. skipper} & {3. protege} \\
{4. philosopher} & {5. captain} & {6. architect} \\
{7. financier} & {8. warrior} & {9. broadcaster} \\
{10. magician} & {11. figher pilot} & {12. boss} \\
\\\end{tabular}
\caption{\label{fig:occupation_words} The most extreme occupations as projected on to the {\em she$-$he} gender direction on g2vNEWS. Occupations such as {\em businesswoman}, where gender is suggested by the orthography, were excluded.}
\end{figure}

\begin{figure}
\begin{tabular}{lllc}
\multicolumn{3}{c}{\bf{Gender stereotype \emph{she}-\emph{he} analogies.}}\\ 
{sewing-carpentry} & {register-nurse-physician } & {housewife-shopkeeper } \\
{nurse-surgeon} & {interior designer-architect} & {softball-baseball } \\
{blond-burly} & {feminism-conservatism } & {cosmetics-pharmaceuticals} \\
{giggle-chuckle} & {vocalist-guitarist} & {petite-lanky } \\
{sassy-snappy} & {diva-superstar} & {charming-affable } \\
{volleyball-football} & {cupcakes-pizzas } & {hairdresser-barber } \\[2ex]
\multicolumn{3}{c}{{\bf Gender appropriate \emph{she}-\emph{he} analogies.}}\\
{queen-king} & {sister-brother} & {mother-father} \\
{waitress-waiter} & {ovarian cancer-prostate cancer} & {convent-monastery} \\
\\\end{tabular}
\caption{\textbf{Analogy examples}. Examples of automatically generated analogies for the pair \emph{she-he} using the procedure described in text. For example, the first analogy is interpreted as \emph{she}:\emph{sewing} :: \emph{he}:\emph{carpentry} in the original w2vNEWS embedding. Each automatically generated analogy is evaluated by 10 crowd-workers are to whether or not it reflects gender stereotype. Top: illustrative gender stereotypic analogies automatically generated from w2vNEWS, as rated by at least 5 of the 10 crowd-workers. Bottom: illustrative generated gender-appropriate analogies.  }
\label{fig:analogy_examples}
\end{figure}

\begin{figure}
\begin{tabular}{lll}
   {\bf {\em softball} extreme}	&{\bf gender portion} &{\bf after debiasing}	\\
   1. pitcher	&-1\% 	& 1. pitcher      \\
   2. bookkeeper	&20\%	&2. infielder		\\
   3. receptionist	&67\%	&3. major leaguer		\\
   4. registered nurse	&29\%	&4. bookkeeper		\\
   5. waitress	&35\%	&5. investigator		\\ [2ex]

   {\bf {\em football} extreme}	&{\bf gender portion} &{\bf after debiasing} \\
   1. footballer	&2\%	&1. footballer	\\
   2. businessman	&31\%	&2. cleric	\\
   3. pundit	&10\%	&3. vice chancellor	\\
   4. maestro	&42\%	&4. lecturer	\\
   5. cleric	&2\%	&5. midfielder	\\
\end{tabular}
\caption{\textbf{Example of indirect bias}. The five most extreme occupations on the {\em softball-football} axis, which indirectly captures gender bias. For each occupation, the degree to which the association represents a gender bias is shown, as described in Section \ref{sec:indirect}.
\label{fig:occupation_associations2}
}
\end{figure}

However, the embeddings also pinpoint sexism implicit in text. For instance, it is also the case that:
\[
\overrightarrow{\text{man}}-\overrightarrow{\text{woman}}\approx \overrightarrow{\text{computer programmer}}-\overrightarrow{\text{homemaker}}.
\]
In other words, the same system that solved the above reasonable analogies will offensively answer ``man is to computer programmer as woman is to $x$'' with $x$=\textit{homemaker}. Similarly, it outputs that a {\em father} is to a {\em doctor} as a \textit{mother} is to a \textit{nurse}. The primary embedding studied in this paper is the popular publicly-available word2vec~\cite{MCCD13,MSCCD13} embedding trained on a corpus of Google News texts consisting of 3 million English words and terms into 300 dimensions, which we refer to here as the w2vNEWS. One might have hoped that the Google News embedding would exhibit little gender bias because many of its authors are professional journalists. We also analyze other publicly available embeddings trained via other algorithms and find similar biases.

In this paper, we will quantitatively demonstrate that word-embeddings contain biases in their geometry that reflect gender stereotypes present in broader society. Due to their wide-spread usage as basic features, word embeddings not only reflect such stereotypes but can also amplify them. This poses a significant risk and challenge for machine learning and its applications. 

To illustrate bias amplification, consider bias present in the task of retrieving relevant web pages for a given query. In web search, one recent project has shown that, when carefully combined with existing approaches, word vectors have the potential to improve web page relevance results \cite{NMCC16}. As an example, suppose the search query is {\em cmu computer science phd student} for a computer science Ph.D.\ student at Carnegie Mellon University. Now, the directory\footnote{Graduate Research Assistants listed at \url{http://cs.cmu.edu/directory/csd}.} offers 127 nearly identical web pages for students --- these pages differ only in the names of the students. A word embedding's semantic knowledge can improve relevance by identifying, for examples, that the terms {\em graduate research assistant} and {\em phd student} are related. However, word embeddings also rank terms related to computer science closer to male names than female names (e.g., the embeddings give \emph{John}:\emph{computer programmer} :: \emph{Mary}:{\em homemaker}). The consequence is that, between two pages that differ only in the names \emph{Mary} and \emph{John}, the word embedding would influence the search engine to rank John's web page higher than Mary. In this hypothetical example, the usage of word embedding makes it even harder for women to be recognized as computer scientists and would contribute to widening the existing gender gap in computer science. While we focus on gender bias, specifically Female-Male (F-M) bias, the approach may be applied to other types of bias.  

Uncovering gender stereotypes from text may seem like a trivial matter of counting pairs of words that occur together. However, such counts are often misleading \cite{gordon2013reporting}. For instance, the term {\em male nurse} is several times more frequent than {\em female nurse} (similarly {\em female quarterback} is many times more frequent than {\em male quarterback}). Hence, extracting associations from text, F-M or otherwise, is not simple, and ``first-order'' approaches would predict that the word {\em nurse} is more male than {\em quarterback}. More generally, Gordon and Van Durme show how {\em reporting bias} \cite{gordon2013reporting}, including the fact that common assumptions are often left unsaid, poses a challenge to extracting knowledge from raw text. Nonetheless, $\wvec{nurse}$ is closer to $\wvec{female}$ than $\wvec{male}$, suggesting that word embeddings may be capable of circumventing reporting bias in some cases.  This happens because word embeddings are trained using second-order methods which require large amounts of data to extract associations and relationships about words. 

The analogies generated from these embeddings spell out the bias implicit in the data on which they were trained. Hence, word embeddings may serve as a means to extract implicit gender associations from a large text corpus similar to how Implicit Association Tests \cite{greenwald1998measuring} detect automatic gender associations possessed by people, which often do not align with self reports.

To quantify bias, we compare a word embedding to the embeddings of a pair of gender-specific words. For instance, the fact that $\wvec{nurse}$ is close to $\wvec{woman}$ is not in itself necessarily biased (it is also somewhat close to $\wvec{man}$ -- all are humans), but the fact that these distances are unequal suggests bias. To make this rigorous, consider the distinction between {\em gender specific} words that are associated with a gender by definition, and the remaining {\em gender neutral} words. Standard examples of gender specific words include \emph{brother}, \emph{sister}, {\em businessman} and {\em businesswoman}. 
The fact that $\wvec{brother}$ is closer to $\wvec{man}$ than to $\wvec{woman}$ is expected since they share the definitive feature of relating to males. We will use the gender specific words to learn a gender subspace in the embedding, and our debiasing algorithm removes the bias only from the gender neutral words while respecting the definitions of these gender specific words. 

We refer to this type of bias, where there is an association between a gender neutral word and a clear gender pair as {\em direct bias}. We also consider a notion of {\em indirect bias},\footnote{The terminology indirect bias follows Pedreshi et al.~\cite{pedreshi2008discrimination} who distinguish {\em direct} versus {\em indirect} discrimination in rules of fair {\em classifiers}. Direct discrimination involves directly using sensitive features such as gender or race, whereas indirect discrimination involves using correlates that are not inherently based on sensitive features but that, intentionally or unintentionally, lead to disproportionate treatment nonetheless.} which manifests as associations between gender neutral words that are clearly arising from gender. For instance, the fact that the word {\em receptionist} is much closer to {\em softball} than {\em football} may arise from female associations with both {\em receptionist} and {\em softball}. Note that many pairs of male-biased (or female-biased) words have legitimate associations having nothing to do with gender. For instance, while the words {\em mathematician} and {\em geometry} both have a strong male bias, their similarity is justified by factors other than gender. More often than not, associations are combinations of gender and other factors that can be difficult to disentangle. Nonetheless, we can use the geometry of the word embedding to determine the degree to which those associations are based on gender.

\smallskip
\noindent
{\bf Aligning biases with stereotypes.} Stereotypes are biases that are widely held among a group of people. We show that the biases in the word embedding are in fact closely aligned with social conception of gender stereotype, as evaluated by U.S.-based crowd workers on Amazon's Mechanical Turk.\footnote{\url{http://mturk.com}} The crowd agreed that the biases reflected both in the location of vectors (e.g. $\wvec{doctor}$ closer to $\wvec{man}$ than to $\wvec{woman}$) as well as in analogies (e.g., \emph{he}:\emph{coward} :: \emph{she}:\emph{whore}) exhibit common gender stereotypes.

\smallskip
\noindent
{\bf Debiasing.} 
Our goal is to reduce gender biases in the word embedding while preserving the useful properties of the embedding. 
Surprisingly, not only does the embedding capture bias, but it also contains sufficient information to reduce this bias, as illustrated in \ref{fig:words}. We will leverage the fact that there exists a low dimensional subspace in the embedding that empirically captures much of the gender bias. 
The goals of debiasing are:
\begin{enumerate}
\item Reduce bias:
\begin{enumerate}
\item Ensure that gender neutral words such as {\em nurse} are equidistant between gender pairs such as {\em he} and {\em she}.
\item Reduce gender associations that pervade the embedding even among gender neutral words. 
\end{enumerate}
\item Maintain embedding utility:
\begin{enumerate}
\item Maintain meaningful non-gender-related associations between gender neutral words, including associations within stereotypical categories of words such as fashion-related words or words associated with football.
\item Correctly maintain definitional gender associations such as between {\em man} and {\em father}.
\end{enumerate}
\end{enumerate}

\paragraph{Paper outline.} 
After discussing related literature, we give preliminaries necessary for understanding the paper in Section \ref{sec:defs}. Next we propose methods to identify the gender bias of an embedding and show that w2vNEWS exhibits bias which is aligned with common gender stereotypes (Section~\ref{sec:empiricalbias}). In Section~\ref{sec:bias}, we define several simple geometric properties associated with bias, and in particular discuss how to identify the gender subspace. Using these geometric properties, we introduce debiasing algorithms (Section~\ref{sec:debias}) and demonstrate their performance (Section~\ref{sec:debiasing_results}). Finally we conclude with additional discussions of related literature, other types of biases in the embedding and future works. 

\section{Related work}
Related work can be divided into relevant literature on bias in language and bias in algorithms. 

\subsection{Gender bias and stereotype in English}
It is important to quantify and understand bias in languages as such biases can reinforce the psychological status of different groups \cite{sapir1985selected}.
Gender bias in language has been studied over a number of decades in a variety of contexts (see, e.g., \cite{holmes2008handbook}) and we only highlight some of the findings here. Biases differ across people though commonalities can be detected. Implicit Association Tests \cite{greenwald1998measuring} have uncovered gender-word biases that people do not self-report and may not even be aware of. Common biases link female terms with liberal arts and family and male terms with science and careers \cite{nosek2002harvesting}. Bias is seen in word morphology, i.e., the fact that words such as {\em actor} are, by default, associated with the dominant class \cite{jakobson1990language}, and female versions of these words, e.g., {\em actress}, are marked. There is also an imbalance in the number of words with F-M with various associations. For instance, while there are more words referring to males, there are many more words that sexualize females than males \cite{stanley1977paradigmatic}. 

Glick and Fiske \cite{glick1996ambivalent} introduce the notion of {\em benevolent sexism} in which women are perceived with positive traits such as helpful or intimacy-seeking. Despite its seemingly positive nature, benevolent sexism can be harmful, insulting, and discriminatory. In terms of words, female gender associations with any word, even a subjectively positive word such as {\em attractive}, can cause discrimination against women if it reduces their association with other words, such as {\em professional}.  

Stereotypes, as mentioned, are biases that are widely held within a group. While gender bias of any kind is concerning, stereotypes are often easier to study due to their consistent nature. Stereotypes have commonalities across cultures, though there is some variation between cultures \cite{cuddy2015men}.  {\em Complimentary stereotypes} are common between females and males, in which each gender is associated with strengths that are perceived to offset its own weaknesses and compliment the strengths of the other gender \cite{jost2005exposure}. These and compensatory stereotypes are used by people to justify the status quo.

Consistent biases have been studied within online contexts and specifically related to the contexts we study such as online news (e.g., \cite{ross2011women}), Web search (e.g., \cite{kay2015unequal}), and Wikipedia (e.g., \cite{wikipediaBias}). In Wikipedia, Wager et al.~\cite{wikipediaBias} found that, as suggested by prior work on gender bias in language \cite{finkbeiner2013}, articles about women more often emphasize their gender, their husbands and their husbands' jobs, and other topics discussed consistently less often than in articles about men. Regarding individual words, they find that certain words are predictive of gender, e.g.,  {\em husband} appears significantly more often in articles about women while {\em baseball} occurs more often in articles about men.  

\subsection{Bias within algorithms}
A number of online systems have been shown to exhibit various biases, such as racial discrimination and gender bias in the ads presented to users \cite{sweeney2013discrimination,datta2015automated}. A recent study found that algorithms used to predict repeat offenders exhibit indirect racial biases \cite{angwin2016}. Different demographic and geographic groups also use different dialects and word-choices in social media \cite{diffusionlexical}. An implication of this effect is that language used by minority group might not be able to be processed by natural language tools that are trained on ``standard'' data-sets. Biases in the curation of machine learning data-sets have explored in \cite{Torralba12,annotationbias}. 

Independent from our work, Schmidt \cite{schmidt2015} identified the bias present in word embeddings and proposed debiasing by entirely removing multiple gender dimensions, one for each gender pair. His goal and approach, similar but simpler than ours, was to entirely remove gender from the embedding. There is also an intense research agenda focused on improving the quality of word embeddings from different angles (e.g., \cite{levyGoldberg14,pennington2014glove,yogatamalearning,faruqui2015retrofitting}), and the difficulty of evaluating embedding quality (as compared to supervised learning) parallels  the difficulty of defining bias in an embedding.  

Within machine learning, a body of notable work has focused on ``fair'' binary classification in particular. A definition of fairness based on legal traditions is presented by Barocas and Selbst \cite{barocas2014big}. Approaches to modify classification algorithms to define and achieve various notions of fairness have been described in a number of works, see, e.g., \cite{barocas2014big, dwork2012fairness,feldman2015certifying} and a recent survey \cite{zliobaite2015survey}.

Feldman et al.~\cite{feldman2015certifying} distinguish classification algorithms that achieve fairness by modifying the underlying data from those that achieve fairness by modifying the classification algorithm. Our approach is more similar to the former. However, it is unclear how to apply any of these previous approaches without a clear classification task in hand, and the problem is exacerbated by indirect bias.  

This prior work on algorithmic fairness is largely for supervised learning. Fair classification is defined based on the fact that algorithms were classifying a set of individuals using a set of features with a distinguished sensitive feature. In word embeddings, there are no clear individuals and no a priori defined classification problem. However, similar issues arise, such as direct and indirect bias~\cite{pedreshi2008discrimination}.

\section{Preliminaries}\label{sec:defs}

We first very briefly define an embedding and some terminology. An embedding consists of a unit vector $\vec{w}\in \mathbb{R}^d$, with $\|\vec{w}\|=1$, for each word (or term) $w\in W$. We assume there is a set of gender neutral words $N \subset W$, such as {\em flight attendant} or {\em shoes}, which, by definition, are not specific to any gender. We denote the size of a set $S$ by $|S|$. We also assume we are given a set of F-M gender pairs $P\subset W \times W$, such as 
{\em she-he} or {\em mother-father} whose definitions differ mainly in gender. Section \ref{sec:gender_neutral} discusses how $N$ and $P$ can be found within the embedding itself, but until then we take them as given. 

As is common, {\em similarity} between words $w_1$ and $w_2$ is measured by their inner product, $\vec{w}_1 \cdot \vec{w}_2.$ Finally, we will abuse terminology and refer to the embedding of a word and the word interchangeably. For example, the statement {\em cat} is more similar to {\em dog} than to {\em cow} means $\wvec{cat}\cdot \wvec{dog} \geq \wvec{cat}\cdot \wvec{cow}$. For arbitrary vectors $u$ and $v$, define:
$$\cos(u,v) = \frac{u \cdot v}{\|u\|\|v\|}.$$
This normalized similarity between vectors $u$ and $v$ is written as $\cos$ because it is the cosine of the angle between the two vectors. 
Since words are normalized $\cos(\vec{w}_1,\vec{w}_2) = \vec{w}_1 \cdot \vec{w}_2$.

\smallskip
\noindent
{\bf Embedding.}
Unless otherwise stated, the embedding we refer to in this paper is the aforementioned w2vNEWS embedding, a $d=300$-dimensional word2vec~\cite{MCCD13,MSCCD13} embedding, which has proven to be immensely useful since it is high quality, publicly available, and easy to incorporate into any application. In particular, we downloaded the pre-trained embedding on the Google News corpus,\footnote{\url{https://code.google.com/archive/p/word2vec/}} and normalized each word to unit length as is common. Starting with the 50,000 most frequent words, we selected only lower-case words and phrases consisting of fewer than 20 lower-case characters (words with upper-case letters, digits, or punctuation were discarded). After this filtering, 26,377 words remained. While we focus on w2vNEWS, we show later that gender stereotypes are also present in other embedding data-sets. 

\smallskip
\noindent
{\bf Crowd experiments.}
All human experiments were performed on the Amazon Mechanical Turk crowdsourcing platform. We selected for U.S.-based workers to maintain homogeneity and reproducibility to the extent possible with crowdsourcing. Two types of experiments were performed: ones where we solicited words from the crowd (to see if the embedding biases contain those of  the crowd) and ones where we solicited ratings on words or analogies generated from our embedding (to see if the crowd's biases contain those from the embedding). These two types of experiments are analogous to experiments performed in rating results in information retrieval to evaluate precision and recall. When we speak of the majority of 10 crowd judgments, we mean those annotations made by 5 or more independent workers.

Since gender associations vary by culture and person, we ask for ratings of stereotypes rather than bias. In addition to possessing greater consistency than biases, people may feel more comfortable rating the stereotypes of their culture than discussing their own gender biases. The Appendix contains the questionnaires that were given to the crowd-workers to perform these tasks.

\section{Gender stereotypes in word embeddings}\label{sec:empiricalbias}

Our first task is to understand the biases present in the word-embedding (i.e. which words are closer to \emph{she} than to \emph{he}, etc.) 
and the extent to which these geometric biases agree with human notion of gender stereotypes. We use two simple methods to approach this problem: 1) evaluate whether the embedding has stereotypes on occupation words and 2) evaluate whether the embedding produces analogies that are judged to reflect stereotypes by humans. The exploratory analysis of this section will motivate the more rigorous metrics used in the next two sections.

\paragraph{Occupational stereotypes.} Figure \ref{fig:occupation_words} lists the occupations that are closest to \emph{she} and to \emph{he} in the w2vNEWS embeddings. We asked the crowdworkers to evaluate whether an occupation is considered female-stereotypic, male-stereotypic, or neutral. Each occupation word was evaluated by ten crowd-workers as to whether or not it reflects gender stereotype. Hence, for each word we had a integer rating, on a scale of 0-10, of stereotypicality. The projection of the occupation words onto the \emph{she}-\emph{he} axis is strongly correlated with the stereotypicality estimates of these words (Spearman $\rho = 0.51$), suggesting that the geometric biases of embedding vectors is aligned with crowd judgment of gender stereotypes. We used occupation words here because they are easily interpretable by humans and often capture common gender stereotypes. Other word sets could be used for this task. Also note that we could have used other words, e.g. \emph{woman} and \emph{man}, as the gender-pair in the task. We chose \emph{she} and \emph{he} because they are frequent and do not have fewer alternative word senses (e.g., {\em man} can also refer to {\em mankind}).

We projected each of the occupations onto the \emph{she-he} direction in the w2vNEWS embedding as well as a different embedding generated by the GloVe algorithm on a web-crawl corpus \cite{pennington2014glove}. The results are highly consistent (Figure \ref{fig:other_embeddings}), suggesting that gender stereotypes is prevalent across different embeddings and is not an artifact of the particular training corpus or methodology of word2vec. 

\begin{figure}
\includegraphics[width=2.7in]{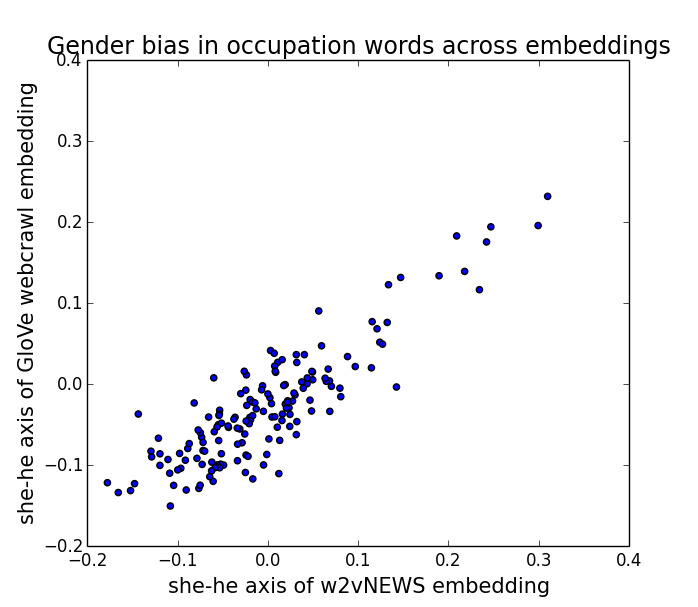}
\caption{\label{fig:other_embeddings}Comparing the bias of two different embeddings--the w2vNEWS and the GloVe web-crawl embedding. In each embedding, the occupation words are projected onto the  \emph{she}-\emph{he} direction. Each dot corresponds to one occupation word; the gender bias of occupations is highly consistent across embeddings (Spearman $\rho = 0.81$). }
\end{figure}

\paragraph{Analogies exhibiting stereotypes.} Analogies are a useful way to both evaluate the quality of a word embedding and also its stereotypes. We first briefly describe how the embedding generate analogies and then discuss how we use analogies to quantify gender stereotype in the embedding. A more detailed discussion of our algorithm and prior analogy solvers is given in Appendix \ref{ap:analogies}.

In the standard analogy tasks, we are given three words, for example \emph{he, she, king}, and look for the 4th word to solve \emph{he} to \emph{king} is as \emph{she} to $x$. Here we modify the analogy task so that given two words, e.g. \emph{he, she}, we want to generate a pair of words, $x$ and $y$, such that \emph{he} to $x$ as \emph{she} to $y$ is a good analogy. This modification allows us to systematically generate pairs of words that the embedding believes it analogous to \emph{he, she} (or any other pair of seed words).

The input into our analogy generator is a seed pair of words $(a,b)$ determining a {\em seed direction} $\vec{a}-\vec{b}$ corresponding to the normalized difference between the two seed words. In the task below, we use $(a,b)=(\text{she}, \text{he})$. We then score all pairs of words $x, y$ by the following metric:
\begin{equation}
\label{eq:analogyS}
\mbox{S}_{(a,b)}(x, y) = \begin{cases}
    \cos\left(\vec{a}-\vec{b}, \vec{x}-\vec{y}\right) & \text{if } \left\|\vec{x}-\vec{y}\right\| \leq \delta\\
    0              & \text{otherwise}
\end{cases}
\end{equation}
where $\delta$ is a threshold for similarity. The intuition of the scoring metric is that we want a good analogy pair to be close to parallel to the seed direction while the two words are not too far apart in order to be semantically coherent. The parameter $\delta$ sets the threshold for semantic similarity. In all the experiments, we take $\delta=1$ as we find that this choice often works well in practice. Since all embeddings are normalized, this threshold corresponds to an angle $\leq \pi/3$, indicating that the two words are closer to each other than they are to the origin. In practice, it means that the two words forming the analogy are significantly closer together than two random embedding vectors.
Given the embedding and seed words, we output the top analogous pairs with the largest positive $S_{(a,b)}$ scores. To reduce redundancy, we do not output multiple analogies sharing the same word $x$.

Since analogies, stereotypes, and biases are heavily influenced by culture, we employed U.S. based crowd-workers to evaluate the analogies output by the analogy generating algorithm described above. For each analogy, we asked the workers two yes/no questions: (a) whether the pairing makes sense as an analogy, and (b) whether it reflects a gender stereotype. Every analogy is judged by 10 workers, and we used the number of workers that rated this pair as stereotyped to quantify the degree of bias of this analogy. Overall, 72 out of 150 analogies were rated as gender-appropriate by five or more crowd-workers, and 29 analogies were rated as exhibiting gender stereotype by five or more crowd-workers (Figure~\ref{fig:direct-result}). Examples of analogies generated from w2vNEWS that were rated as stereotypical are shown at the top of Figure~\ref{fig:analogy_examples}, and examples of analogies that make sense and are rated as gender-appropriate are shown at the bottom of Figure~\ref{fig:analogy_examples}. The full list of analogies and crowd ratings are in Appendix \ref{app:analogies}.

\paragraph{Indirect gender bias.} The direct bias analyzed above manifests in the relative similarities between gender-specific words and gender neutral words. Gender bias could also affect the relative geometry between gender neutral words themselves. To test this \emph{indirect} gender bias, we take pairs of words that are gender-neutral, for example \emph{softball} and \emph{football}. We project all the occupation words onto the $\wvec{softball} - \wvec{football}$ direction and looked at the extremes words, which are listed in Figure~\ref{fig:occupation_associations2}. 
For instance, the fact that the words {\em bookkeeper} and {\em receptionist} are much closer to {\em softball} than {\em football} may result indirectly from female associations with {\em bookkeeper}, {\em receptionist} and {\em softball}. It's important to point out that 
 that many pairs of male-biased (or female-biased) words have legitimate associations having nothing to do with gender. For example, while both {\em footballer} and {\em football} have strong male biases, their similarity is justified by factors other than gender. In Section~\ref{sec:bias}, we define a metric to more rigorously quantify these indirect effects of gender bias.

\section{Geometry of Gender and Bias}\label{sec:bias}

In this section, we study the bias present in the embedding geometrically, identifying the gender direction and quantifying the bias independent of the extent to which it is aligned with the crowd bias. We develop metrics of direct and indirect bias that more rigorously quantify the observations of the previous section. 

\subsection{Identifying the gender subspace}\label{sec:direction}

Language use is ``messy'' and therefore individual word pairs do not always behave as expected. For instance, the word {\em man} has several different usages: it may be used as an exclamation as in {\em oh man!} or to refer to people of either gender or as a verb, e.g., {\em man the station}. To more robustly estimate bias, we shall aggregate across multiple paired comparisons. By combining several directions, such as $\wvec{she}-\wvec{he}$ and $\wvec{woman} - \wvec{man}$, we identify a {\bf gender direction} $g\in \mathbb{R}^d$ that largely captures gender in the embedding. This direction helps us to quantify direct and indirect biases in words and associations.

\begin{figure}
\begin{tabular}{r@{\hskip -0.0in}lllr@{\hskip -0.0in}lll}
& & def. & stereo. & & & def. & stereo. \\
$\wvec{she}-$&$\wvec{he}$ & 92\% & 89\% & $\wvec{daughter}-$&$\wvec{son}$ & 93\% & 91\% \\
$\wvec{her}-$&$\wvec{his}$ & 84\% & 87\% & $\wvec{mother}-$&$\wvec{father}$ & 91\% & 85\% \\
$\wvec{woman}-$&$\wvec{man}$ & 90\% & 83\% & $\wvec{gal}-$&$\wvec{guy}$ & 85\% & 85\% \\
$\wvec{Mary}-$&$\wvec{John}$ & 75\% & 87\% & $\wvec{girl}-$&$\wvec{boy}$ & 90\% & 86\% \\
$\wvec{herself}-$&$\wvec{himself}$ & 93\% & 89\% & $\wvec{female}-$&$\wvec{male}$ & 84\% & 75\%  
\end{tabular}
\caption{\label{fig:definitional_pairs} 
Ten possible word pairs to define gender, ordered by word frequency, along with agreement with two sets of 100 words solicited from the crowd, one with definitional and and one with stereotypical gender associations. For each set of words, comprised of the most frequent 50 female and 50 male crowd suggestions, the accuracy is shown for the corresponding gender classifier based on which word is closer to a target word, e.g., the {\em she-he} classifier predicts a word is female if it is closer to {\em she} than {\em he}. With roughly 80-90\% accuracy, the gender pairs predict the gender of both stereotypes and definitionally gendered words solicited from the crowd. }
\end{figure}

In English as in many languages, there are numerous gender pair terms, and for each we can consider the difference between their embeddings. Before looking at the data, one might imagine that they all had roughly the same vector differences, as in the following caricature: 
\begin{eqnarray*}
\wvec{grandmother} &=& \wvec{wise} + \wvec{gal}\\
\wvec{grandfather} &=& \wvec{wise} + \wvec{guy}\\
\wvec{grandmother} - \wvec{grandfather} &=& \wvec{gal}  - \wvec{guy} = g
\end{eqnarray*}
However, gender pair differences are not parallel in practice, for multiple reasons. First, there are different biases associated with with different gender pairs. Second is polysemy, as mentioned, which in this case occurs due to the other use of {\em grandfather} as in {\em to grandfather a regulation}. Finally, randomness in the word counts in any finite sample will also lead to differences.  Figure \ref{fig:definitional_pairs} illustrates ten possible gender pairs, $\bigl\{(x_i,y_i)\bigr\}_{i=1}^{10}$.

We experimentally verified that the pairs of vectors corresponding to these words do agree with the crowd concept of gender. On Amazon Mechanical Turk, we asked crowdworkers to generate two lists of words: one list corresponding to words that they think are gendered by definition ({\em waitress}, {\em menswear}) and a separate list corresponding to words that they believe captures gender stereotypes (e.g., {\em sewing}, {\em football}). From this we generated the most frequently suggested  50 male and 50 female words for each list to be used for a classification task. For each candidate pair, for example $\wvec{she}, \wvec{he}$, we say that it accurately classifies a crowd suggested female definition (or stereotype) word if that word vector is closer to $\wvec{she}$ than to $\wvec{he}$. Table~\ref{fig:definitional_pairs} reports the classification accuracy for definition and stereotype words for each gender pair. The accuracies are high, indicating that these pairs capture the intuitive notion of gender. 

To identify the gender subspace, we took the ten gender pair difference vectors and computed its principal components (PCs). As Figure \ref{fig:PCA} shows, there is a single direction that explains the majority of variance in these vectors. The first eigenvalue is significantly larger than the rest. Note that, from the randomness in a finite sample of ten noisy vectors, one expects a decrease in eigenvalues. However, as also illustrated in \ref{fig:PCA}, the decrease one observes due to random sampling is much more gradual and uniform. Therefore we hypothesize that the top PC, denoted by the unit vector $g$, captures the gender subspace. In general, the gender subspace could be higher dimensional and all of our analysis and algorithms (described below) work with general subspaces.

\begin{figure}
\begin{tabular}{cc}
\includegraphics[width=2in]{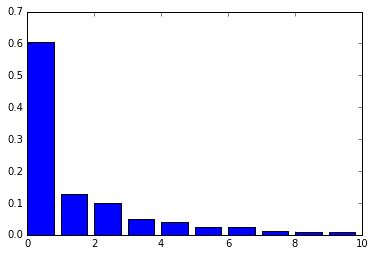} & \includegraphics[width=2in]{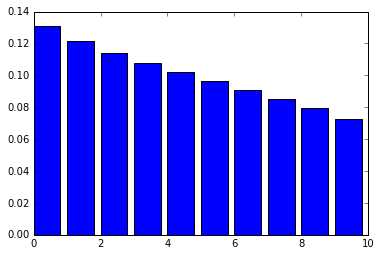}
\end{tabular}
\caption{\label{fig:PCA} Left: the percentage of variance explained in the PCA of these vector differences  (each difference normalized to be a unit vector). The top component explains significantly more variance than any other. 
Right: for comparison, the corresponding percentages for random unit vectors (figure created by averaging over 1,000 draws of ten random unit vectors in 300 dimensions).
}
\end{figure}

\subsection{Direct bias}
To measure direct bias, we first identify words that should be gender-neutral for the application in question. How to generate this set of gender-neutral words is described in  Section \ref{sec:gender_neutral}.
Given the gender neutral words, denoted by $N$, and the gender direction learned from above, $g$, we define the direct gender bias of an embedding to be
\[
\text{DirectBias}_c = \frac{1}{|N|} \sum_{w \in N} \left|\cos(\vec{w}, g)  \right|^c
\]
where $c$ is a parameter that determines how \emph{strict} do we want to in measuring bias. If $c$ is 0, then $\left|\cos(\vec{w} - g)  \right|^c = 0$ only if $\vec{w}$ has no overlap with $g$ and otherwise it is 1. Such strict measurement of bias might be desirable in settings such as the college admissions example from the Introduction, where it would be unacceptable for the embedding to introduce a slight preference for one candidate over another by gender. A more gradual bias would be setting $c = 1$.  The presentation we have chosen favors simplicity -- it would be natural to extend our definitions to weight words by frequency. For example, in w2vNEWS, if we take $N$ to be the set of 327 occupations, then $\text{DirectBias}_1=0.08$, which confirms that many occupation words have substantial component along the gender direction. 

\subsection{Indirect bias}\label{sec:indirect}
Unfortunately, the above definitions still do not capture indirect bias. To see this, imagine completely removing from the embedding both words in gender pairs (as well as words such as {\em beard} or {\em uterus} that are arguably gender-specific but which cannot be paired). There would still be  indirect gender association in that a word that should be gender neutral, such as {\em receptionist}, is closer to {\em softball} than {\em football} (see Figure \ref{fig:occupation_associations2}). As discussed in the Introduction, it can be subtle to obtain the ground truth of the extent to which such similarities is due to gender. 

The gender subspace $g$ that we have identified allows us to quantify the contribution of $g$ to the similarities between any pair of words. We can decompose a given word vector $w \in \mathbb{R}^d$ as $w = w_g + w_{\perp}$, where $w_g = (w \cdot g) g$ is the contribution from gender and $w_{\perp} = w - w_g$. Note that all the word vectors are normalized to have unit length. We define the gender component to the similarity between two word vectors $w$ and $v$ as 
\[
\beta(w, v) = \left( w\cdot v - \frac{w_{\perp}\cdot v_{\perp}}{\|w_{\perp}\|_2 \|v_{\perp}\|_2}  \right)\bigg/w\cdot v.
\]

The intuition behind this metric is as follow: $\frac{w_{\perp}\cdot v_{\perp}}{\|w_{\perp}\|_2 \|v_{\perp}\|_2}$ is the inner product between the two vectors if we project out the gender subspace and renormalize the vectors to be of unit length. The metric quantifies how much this inner product changes (as a fraction of the original inner product value) due to this operation of removing the gender subspace. Because of noise in the data, every vector has some non-zero component $w_{\perp}$ and $\beta$ is well-defined. Note that $\beta(w,w) = 0$, which is reasonable since the similarity of a word to itself should not depend on gender contribution. If $w_g = 0 = v_g$, then $\beta(w,v) = 0$; and if $w_{\perp} = 0 = v_{\perp}$, then $\beta(w, v) = 1$. 

In Figure~\ref{fig:occupation_associations2}, as a case study, we examine the most extreme words on the $\wvec{softball} - \wvec{football}$ direction. The five most extreme words (i.e. words with the highest positive or the lowest negative projections onto  $\wvec{softball} - \wvec{football}$) are shown in the table. Words such as \emph{receptionist}, \emph{waitress} and \emph{homemaker} are closer to \emph{softball} than \emph{football}, and the $\beta$'s between these words and \emph{softball} is substantial (67\%, 35\%, 38\%, respectively). This suggests that the apparent similarity in the embeddings of these words to $\wvec{softball}$ can be largely explained by gender biases in the embedding. Similarly, \emph{businessman} and \emph{maestro} are closer to \emph{football} and this can also be attributed largely to indirect gender bias, with $\beta$'s of 31\% and 42\%, respectively.

\begin{figure}
\includegraphics[width=\linewidth,trim={0.5in 0 0 0},clip]{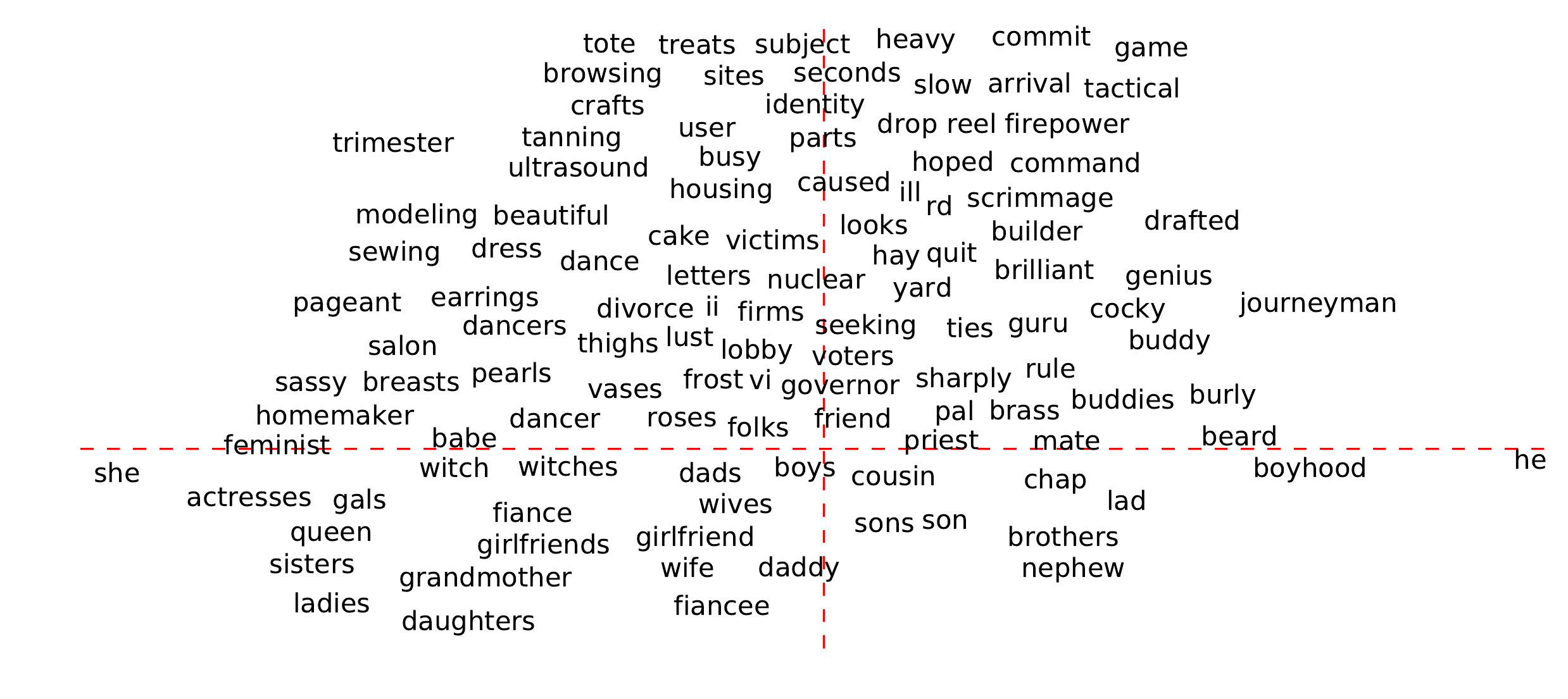}
\caption{Selected words projected along two axes: $x$ is a projection onto the difference between the embeddings of the words {\em he} and {\em she}, and $y$ is a direction learned in the embedding that captures gender neutrality, with gender neutral words above the line and gender specific words below the line. Our hard debiasing algorithm removes the gender pair associations for gender neutral words. In this figure, the words above the horizontal line would all be collapsed to the vertical line.
\label{fig:words}}
\end{figure}

\section{Debiasing algorithms}\label{sec:debias}
The debiasing algorithms are defined in terms of sets of words rather than just pairs, for generality, so that we can consider other biases such as racial or religious biases. We also assume that we have a set of words to neutralize, which can come from a list or from the embedding as described in Section \ref{sec:gender_neutral}. (In many cases it may be easier to list the gender specific words not to neutralize as this set can be much smaller.)

The first step, called {\bf Identify gender subspace}, is to identify a direction (or, more generally, a subspace) of the embedding that captures the bias. For the second step, we define two options: {\bf Neutralize and Equalize} or {\bf Soften}. {\bf Neutralize} ensures that gender neutral words are zero in the gender subspace. {\bf Equalize} perfectly equalizes sets of words outside the subspace and thereby enforces the property that any neutral word is equidistant to all words in each equality set. For instance, if $\{\text{grandmother}, \text{grandfather}\}$ and $\{\text{guy}, \text{gal}\}$ were two equality sets, then after equalization {\em babysit} would be equidistant to {\em grandmother} and {\em grandfather} and also equidistant to {\em gal} and {\em guy}, but presumably closer to the grandparents and further from the {\em gal} and {\em guy}. This is suitable for applications where one does not want any such pair to display any bias with respect to neutral words. 

The disadvantage of Equalize is that it removes certain distinctions that are valuable in certain applications. For instance, one may wish a language model to assign a higher probability to the phrase {\em to grandfather a regulation}) than {\em to grandmother a regulation} since {\em grandfather} has a meaning that {\em grandmother} does not -- equalizing the two removes this distinction. The Soften algorithm reduces the differences between these sets while maintaining as much similarity to the original embedding as possible, with a parameter that controls this trade-off. 

To define the algorithms, it will be convenient to introduce some further notation. A subspace $B$ is defined by $k$ orthogonal unit vectors $B = \{b_1, \ldots, b_k\} \subset \mathbb{R}^d$. In the case $k=1$, the subspace is simply a direction. We denote the projection of a vector $v$ onto $B$ by,
$$v_B = \sum_{j=1}^k (v \cdot b_j) b_j.$$
This also means that $v-v_B$ is the projection onto the orthogonal subspace.

\smallskip
\noindent
{\bf Step 1: Identify gender subspace}. Inputs: word sets $W$,  defining sets $D_1, D_2, \ldots, D_n \subset W$ as well as embedding $\bigr\{\vec{w} \in \mathbb{R}^d\bigl\}_{w \in W}$ and integer parameter $k \geq 1$. Let $$\mu_i := \sum_{w \in D_i} \vec{w}/|D_i|$$ be the means of the defining sets. Let the bias subspace $B$ be the first $k$ rows of $\mathrm{SVD}(\textbf{C})$ where $$\textbf{C} := \sum_{i=1}^n \sum_{w \in D_i} (\vec{w}-\mu_i)^T (\vec{w}-\mu_i)\bigl/|D_i|.$$ 

\smallskip

{\bf Step 2a: Hard de-biasing (neutralize and equalize)}. Additional inputs: words to neutralize $N\subseteq W$, family of equality sets $\mathcal{E} = \{E_1, E_2, \ldots, E_m\}$ where each $E_i \subseteq W$. For each word $w\in N$, let $\vec{w}$ be re-embedded to $$\vec{w} := (\vec{w} - \vec{w}_B)\bigl/\|\vec{w} - \vec{w}_B\|.$$ 
For each set $E\in \mathcal{E}$, let 
\begin{eqnarray*}
\mu &:=& \sum_{w \in E} w/|E|\\
\nu &:=& \mu - \mu_B\\
\text{ For each }w \in E, ~~ \vec{w} &:=& \nu + \sqrt{1-\|\nu\|^2}\frac{\vec{w}_B-\mu_B}{\|\vec{w}_B-\mu_B\|}
\end{eqnarray*}
Finally, output the subspace $B$ and the new embedding $\bigr\{\vec{w} \in \mathbb{R}^d\bigl\}_{w \in W}$.

Equalize equates each set of words outside of $B$ to their simple average $\nu$ and then adjusts vectors so that they are unit length. It is perhaps easiest to understand by thinking separately of the two components $\vec{w}_B$ and $\vec{w}_{\perp B} = \vec{w}-\vec{w}_B$. The latter $\vec{w}_{\perp B}$ are all simply equated to their average. Within $B$, they are centered (moved to mean 0) and then scaled so that each $\vec{w}$ is unit length. To motivate why we center, beyond the fact that it is common in machine learning, consider the bias direction being the gender direction ($k=1$) and a gender pair such as $E=\{\text{male}, \text{female}\}$. As discussed, it so happens that both words are positive  (female) in the gender direction, though {\em female} has a greater projection. One can only speculate as to why this is the case, e.g., perhaps the frequency of text such as {\em male nurse} or {\em male escort} or {\em she was assaulted by the male}. However, because {\em female} has a greater gender component, after centering the two will be symmetrically balanced across the origin. If instead, we simply scaled each vector's component in the bias direciton without centering, {\em male} and {\em female} would have exactly the same embedding and we would lose analogies such as {\em father:male :: mother:female}.

Before defining the Soften alternative step, we note that Neutralizing and Equalizing completely remove pair bias.
\begin{observation}
After Steps 1 and 2a, for any gender neutral word $w$ any equality set $E$, and any two words $e_1, e_2 \in E$, $\vec{w} \cdot \vec{e}_1 = w \cdot \vec{e}_2$ and $\|\vec{w}-\vec{e}_1\| = \|\vec{w}-\vec{e}_2\|$. Furthermore, if $\mathcal{E}=\bigl\{\{x,y\}|(x,y)\in P\bigr\}$ are the sets of pairs defining PairBias, then $\mathrm{PairBias}=0$.
\end{observation}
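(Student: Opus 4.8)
The plan is to exploit the orthogonal decomposition $\mathbb{R}^d = B \oplus B^\perp$ and track what each of the two re-embedding operations does to this decomposition. The key structural fact I would establish first is that Neutralize kills the $B$-component of every gender neutral word: after the assignment $\vec{w} := (\vec{w}-\vec{w}_B)/\|\vec{w}-\vec{w}_B\|$, we have $\vec{w}_B = 0$, so the new $\vec{w}$ lies entirely in $B^\perp$ and has unit norm. The companion fact is what Equalize does to an equality set $E$: writing the new embedding of each $e \in E$ as $\vec{e} = \nu + r_e$ with $\nu = \mu - \mu_B \in B^\perp$ and $r_e = \sqrt{1-\|\nu\|^2}\,(\vec{e}_B-\mu_B)/\|\vec{e}_B-\mu_B\| \in B$, the orthogonal component $\nu$ is identical for every $e \in E$, while the $B$-components $r_e$ all share the common norm $\sqrt{1-\|\nu\|^2}$. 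A one-line Pythagorean check, $\|\vec{e}\|^2 = \|\nu\|^2 + \|r_e\|^2 = 1$, confirms every equalized vector is again a unit vector, which I will need for the distance claim.

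With these two facts in hand, the first equality is immediate. For $w \in N$ and $e \in E$, since $\vec{w} \in B^\perp$ and $r_e \in B$ are orthogonal,
\[
\vec{w}\cdot\vec{e} = \vec{w}\cdot(\nu + r_e) = \vec{w}\cdot\nu,
\]
and the right-hand side does not depend on $e$. Hence $\vec{w}\cdot\vec{e}_1 = \vec{w}\cdot\nu = \vec{w}\cdot\vec{e}_2$. For the distance claim I would expand
\[
\|\vec{w}-\vec{e}\|^2 = \|\vec{w}\|^2 - 2\,\vec{w}\cdot\vec{e} + \|\vec{e}\|^2 = 2 - 2\,\vec{w}\cdot\nu,
\]
using $\|\vec{w}\|=\|\vec{e}\|=1$ together with the inner-product identity just derived; since this expression is independent of $e$, the two distances coincide.

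The PairBias claim then follows as a corollary with no extra work: taking $\mathcal{E}=\{\{x,y\}\mid(x,y)\in P\}$, each gender pair is an equality set, so the two equalities above give $\vec{w}\cdot\vec{x}=\vec{w}\cdot\vec{y}$ (equivalently $\|\vec{w}-\vec{x}\|=\|\vec{w}-\vec{y}\|$) for every gender neutral $w$ and every pair $(x,y)\in P$; since PairBias aggregates exactly these per-pair discrepancies, it vanishes termwise. There is no genuine obstacle here — the statement is essentially a bookkeeping exercise — but the one place to be careful is the normalization: the identity $\vec{w}\cdot\vec{e}=\vec{w}\cdot\nu$ relies on Neutralize having zeroed out $\vec{w}_B$, and the passage from equal inner products to equal distances relies on all vectors (both the neutralized $\vec{w}$ and the equalized $e$) being unit length, so I would state these two normalization facts explicitly before invoking them.
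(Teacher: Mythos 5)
Your proof is correct and takes essentially the same route as the paper's: Neutralize places $\vec{w}$ in the orthogonal complement of $B$, Equalize forces $\vec{e}_1-\vec{e}_2$ to lie in $B$ (you phrase this as both words sharing the common component $\nu$), orthogonality gives the inner-product equality, and the distance claim follows from $\|u_1-u_2\|^2 = 2-2\,u_1\cdot u_2$ for unit vectors. Your explicit Pythagorean verification that the equalized vectors are unit length spells out a normalization fact the paper's proof uses implicitly, but the argument is the same.
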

\begin{proof}
Step 1 ensures that $\vec{w}_B=0$, while step 2a ensures that $\vec{e}_1 - vec{e}_2$ lies entirely in $B$. Hence, their inner product is 0 and 
$\vec{w} \cdot \vec{e}_1 = w \cdot \vec{e}_2$. Lastly, $\|\vec{w}-\vec{e}_1\| = \|\vec{w}-\vec{e}_2\|$ follows from the fact that $\|u_1-u_2\|^2 = 2-2u_1\cdot u_2$ for unit vectors $u_1, u_2$ and PairBias being 0 follows trivially from the definition of PairBias.
\end{proof}

{\bf Step 2b: Soft bias correction}.
Overloading the notation, we let $W \in \mathbb{R}^{d \times |vocab|}$ denote the matrix of all embedding vectors and $N$ denote the matrix of the embedding vectors corresponding to gender neutral words. $W$ and $N$ are learned from some corpus and are inputs to the algorithm. The desired debiasing transformation $T \in \mathbb{R}^{d \times d}$ is a linear transformation that seeks to preserve pairwise inner products between all the word vectors while minimizing the projection of the gender neutral words onto the gender subspace. This can be formalized as the following optimization problem
\[
\min_T  \| (TW)^T(TW)-W^T W \|_F^2 + \lambda \|(TN)^T(TB)\|_F^2
\]
where $B$ is the gender subspace learned in Step 1 and $\lambda$ is a tuning parameter that balances the objective of preserving the original embedding inner products with the goal of reducing gender bias. For $\lambda$ large, $T$ would remove the projection onto $B$ from all the vectors in $N$, which corresponds exactly to Step 2a. In the experiment, we use $\lambda=0.2$. The optimization problem is a semi-definite program and can be solved efficiently. The output embedding is normalized to have unit length, $\hat{W} = \{Tw/\|Tw\|_2, w \in W\}$.

\section{Determining gender neutral words}\label{sec:gender_neutral}
For practical purposes, since there are many fewer gender specific words, it is more efficient to enumerate the set of gender specific words $S$ and take the gender neutral words to be the compliment, $N=W\setminus S$. Using dictionary definitions, we derive a subset $S_0$ of 218 words out of the words in w2vNEWS. Recall that this embedding is a subset of 26,377 words out of the full 3 million words in the embedding, as described in Section \ref{sec:defs}. This base list $S_0$ is given in Appendix \ref{ap:gender_neutral}. Note that the choice of words is subjective and ideally should be customized to the application at hand. 

We generalize this list to the entire 3 million words in the Google News embedding using a linear classifier, resulting in the set $S$ of 6,449 gender-specific words. More specifically, we trained a linear Support Vector Machine (SVM) with the default regularization parameter of $C=1.0$. We then ran this classifier on the remaining words, taking $S=S_0\cup S_1$, where $S_1$ were the words labeled as gender specific by our classifier among the words in the entire embedding that were not in the 26,377 words of w2vNEWS.

Using 10-fold cross-validation to evaluate the accuracy of this process, we find an $F$-score of $.627 \pm .102$ based on stratified 10-fold cross-validation. The binary accuracy is well over 99\% due to the imbalanced nature of the classes. For another test of how accurately the embedding agrees with our base set of 218 words, we evaluate the class-balanced error by re-weighting the examples so that the positive and negative examples have equal weights, i.e., weighting each class inverse proportionally to the number of samples from that class. Here again, we use stratified 10-fold cross validation to evaluate the error. Within each fold, the regularization parameter was also chosen by 10-fold (nested) cross validation. The average (balanced) accuracy of the linear classifiers, across folds, was $95.12\% \pm 1.46\%$ with 95\% confidence. 

Figure~\ref{fig:words} illustrates the results of the classifier for separating gender-specific words from gender-neutral words. To make the figure legible, we show a subset of the words. The $x$-axis correspond to projection of words onto the $\wvec{she} - \wvec{he}$ direction and the $y$-axis corresponds to the distance from the decision boundary of the trained SVM. 

\section{Debiasing results}\label{sec:debiasing_results}

We evaluated our debiasing algorithms to ensure that they preserve the desirable properties of the original embedding while reducing both direct and indirect gender biases.

\begin{table}
\begin{tabular}{l|lll}
  & RG & WS & analogy  \\
  \hline
  \hline\\
Before  &	62.3& 54.5 & 57.0 \\
Hard-debiased & 62.4	& 54.1 & 57.0\\
Soft-debiased & 62.4 & 54.2 & 56.8 \\
\end{tabular}
	\caption{The columns show the performance of the original 
	w2vNEWS embedding (``before'') and the debiased w2vNEWS on the standard evaluation metrics measuring coherence and analogy-solving abilities: RG \cite{rubenstein1965contextual}, WS \cite{finkelstein2001placing}, MSR-analogy \cite{mikolov2013linguistic}. Higher is better. The results show that the performance does not degrade after debiasing.
	Note that we use a subset of vocabulary in the experiments.   Therefore, 
	the performances are lower than the previously published results.  
	\label{tab:variance}}
\end{table}

\paragraph{Direct Bias.} 
First we used the same analogy generation task as before: 
for both the hard-debiased and the soft-debiased embeddings, we automatically generated pairs of words that are analogous to 
\emph{she-he} and asked crowd-workers to evaluate whether these pairs reflect 
gender stereotypes. Figure \ref{fig:direct-result} shows the results.   
On the initial w2vNEWS embedding, 19\% of the top 150 analogies were judged as 
showing gender stereotypes by a majority of the ten workers.    
After applying our hard debiasing algorithm, only 6\% of the new embedding were 
judged as stereotypical. As an example, consider the analogy puzzle, \emph{he} to \emph{doctor} is as \emph{she} to $X$. The original embedding returns $X = $ \emph{nurse} while the hard-debiased embedding finds $X =$ \emph{physician}. Moreover the hard-debiasing algorithm preserved gender appropriate analogies such as \emph{she} to \emph{ovarian cancer} is as \emph{he} to \emph{prostate cancer}. 
This demonstrates that the hard-debiasing has effectively reduced the gender stereotypes in the word embedding. 
Figure \ref{fig:direct-result} also shows that the number of 
appropriate analogies remains similar as in the original embedding after executing hard-debiasing.  This 
demonstrates that that the quality of the embeddings 
is preserved. The details results are in Appendix \ref{app:analogies}. Soft-debiasing was less effective in removing gender bias. 

To further confirms the quality of embeddings after debiasing,
 we tested the debiased embedding on several standard benchmarks that measure whether related words have similar embeddings  as well as how well the embedding performs in analogy tasks. Table \ref{tab:variance} shows the results on the original and the new embeddings and the transformation does not negatively impact the performance.

\paragraph{Indirect bias.} We also investigated how the strict debiasing 
algorithm affects indirect gender bias. Because we do not have the ground 
truth on the indirect effects of gender bias, it is challenging to quantify 
the performance of the algorithm in this regard. However we do see promising 
qualitative improvements, as shown in Figure~\ref{fig:occupation_associations2} 
in the \emph{softball}, \emph{football} example. After applying the strict 
debias algorithm, we repeated the experiment and show the most extreme words 
in the $\wvec{softball}-\wvec{football}$ direction. The most extreme words 
closer to \emph{softball} are now \emph{infielder} and \emph{major leaguer} in 
addition to \emph{pitcher}, which are more relevant and do not exhibit 
gender bias. Gender stereotypic associations such are \emph{receptionist}, 
\emph{waitress} and \emph{homemaker} are moved down the list. Similarly, words 
that clearly show male bias, e.g. \emph{businessman}, are also no longer at 
the top of the list.  Note that the two most extreme words in the 
$\wvec{softball} - \wvec{football}$ direction are \emph{pitcher} and 
\emph{footballer}. The similarities between \emph{pitcher} and \emph{softball} 
and between \emph{footballer} and \emph{football} comes from the actual 
functions of these words and hence have little gender contribution. These two 
words are essentially unchanged by the debiasing algorithm.

\begin{figure}
\begin{tabular}{cc}
\includegraphics[width=.45\textwidth]{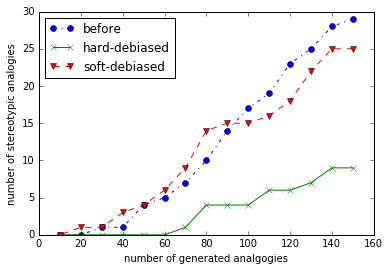} & \includegraphics[width=.45\textwidth]{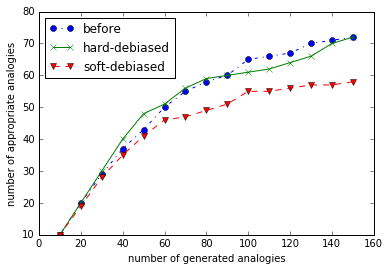}
\end{tabular}
\caption{Number of stereotypical (Left) and appropriate (Right) analogies 
generated by wordembeddings before and after debiasing.  }
\label{fig:direct-result}
\end{figure}

\section{Discussion}

Word embeddings help us further our understanding of bias in language. We find a single direction that largely captures gender, that helps us capture associations between gender neutral  words and gender as well as indirect inequality.The projection of gender neutral words on this direction enables us to quantify their degree of female- or male-bias.

To reduce the bias in an embedding, we change the embeddings of gender neutral words, by removing their gender associations. For instance, {\em nurse} is moved to to be equally male and female in the direction $g$.  In addition, we find that gender-specific words have additional biases beyond $g$. For instance, {\em grandmother} and {\em grandfather} are both closer to {\em wisdom} than {\em gal} and {\em guy} are, which does not reflect a gender difference. On the other hand, the fact that {\em babysit} is so much closer to {\em grandmother} than {\em grandfather} (more than for other gender pairs) is a gender bias specific to {\em grandmother}. By equating {\em grandmother} and {\em grandfather} outside of gender, and since we've removed $g$ from {\em babysit}, both {\em grandmother} and {\em grandfather} and equally close to {\em babysit} after debiasing. By retaining the gender component for gender-specific words, we maintain analogies such as {\em she:grandmother :: he:grandfather}. Through empirical evaluations, we show that our hard-debiasing algorithm significantly reduces both direct and indirect gender bias while preserving the utility of the embedding. We have also developed a soft-embedding algorithm which balances reducing bias with preserving the original distances, and could be appropriate in specific settings. 

One perspective on bias in word embeddings is that it merely reflects bias in society, and therefore one should attempt to debias society rather than word embeddings. However, by reducing the bias in today's computer systems (or at least not amplifying the bias), which is increasingly reliant on word embeddings, in a small way debiased word embeddings can hopefully contribute to reducing gender bias in society. At the very least, machine learning should not be used to inadvertently amplify these biases, as we have seen can naturally happen. 

In specific applications, one might argue that gender biases in the embedding (e.g. \emph{computer programmer} is closer to \emph{he}) could capture useful statistics and that, in these special cases, the original biased embeddings could be used. However given the potential risk of having machine learning algorithms that amplify gender stereotypes and discriminations, we recommend that we should err on the side of neutrality and use the debiased embeddings provided here as much as possible. 

In this paper, we focus on quantifying and reducing gender bias in word embeddings. Corpus of documents often contain other undesirable stereotypes and these can also be reflected in the embedding vectors. The same w2vNEWS also exhibits strong racial stereotype. For example, projecting all the occupation words onto the direction $\wvec{minorities} - \wvec{whites}$, we find that the most extreme occupations closer to \emph{whites} are \emph{parliamentarian}, \emph{advocate}, \emph{deputy}, \emph{chancellor}, \emph{legislator}, and \emph{lawyer}. In contrast, the most extreme occupations at the \emph{minorites} end are \emph{butler}, \emph{footballer}, \emph{socialite}, and \emph{crooner}. It is a subtle issue to understand the direct and indirect bias due to racial, ethnic and cultural stereotypes. An important direction of future work would be to quantify and remove these biases.

While we focus on English word embeddings, it is also an interesting direction to consider how the approach and findings here would apply to other languages, especially languages with grammatical gender where the definitions of most nouns carry a gender marker.

\bibliographystyle{abbrv}
\bibliography{bibfile}
\appendix

\section{Generating analogies}\label{ap:analogies}

We now expand on different possible methods for generating $(x,y)$ pairs, given $(a,b)$ for generating analogies $a$:$x$ :: $b$:$y$. The first and simplest metric is to consider scoring an analogy by $\|(\vec{a}-\vec{b})-(\vec{x}-\vec{y})$. This may be called the {\em parallelogram} approach and, for the purpose of finding the best single $y$ given $a,b,x$, it is equivalent to the most common approach to finding single word analogies, namely maximizing $\cos(\vec{y}, \vec{x}+\vec{b}-\vec{a})$ called {\em cosAdd} in earlier work \cite{mikolov2013linguistic} since we assume all vectors are unit length. This works well in some cases, but a weakness can be seen that, for many triples $(a,b,x)$, the closest word to $x$ is $y=x$, i.e., $x = \arg\min_{y} \|(\vec{a}-\vec{b})-(\vec{x}-\vec{y})\|$. As a result, the definition explicitly excludes the possibility of returning $x$ itself. In these cases, $y$ is often a word very similar to $x$, and in most of these cases such an algorithm produces two opposing analogies: $a$:$x$ :: $b$:$y$ as well as $a$:$y$ :: $b$:$x$, which violates a desideratum of analogies (see \cite{turney2012domain}, section 2.2).  

Related issues are discussed in \cite{turney2012domain, levyGoldberg14}, the latter of which proposes the 3CosMul objective to finding $y$ given $(a,b,x)$:
$$\max_y \frac{(1+\cos(\vec{x},\vec{y}))(1+\cos(\vec{x},\vec{b})}{1+\cos(\vec{y}, \vec{a})+\epsilon}.$$
The additional $\epsilon$ is necessary so that the denominator is positive. This approach is designed for finding a single word $y$ and not directly applicable for the problem of generating both $x$ and $y$ as the objective is not symmetric in $x$ and $y$.

In the spirit of their work, we note that a desired property is that the direction $\vec{a}-\vec{b}$ should be similar (in angle) to the direction $\vec{x}-\vec{y}$ even if the magnitudes differ. Interestingly, given $(a,b,x)$, the $y$ that maximizes $\cos(\vec{a}-\vec{b},\vec{x}-\vec{y})$ is generally an extreme. For instance, for $a=$\em{he} and $b=$\em{she}, for the vast majority of words $x$, the word {\em her} maximizes the expression for $y$. This is due to the fact that the most significant difference between a random word $x$ and the word {\em her} is that {\em her} is likely much more feminine than $x$. Since, from a perceptual point of view it is easier to compare and contrast similar items than very different items, we instead seek $x$ and $y$ that are not semantically similar, which is why our definition is restricted to $\|\vec{x}-\vec{y}\|\leq \delta$.  

As $\delta$ varies from small to large, the analogies vary from generating very similar $x$ and $y$ to very loosely related $x$ and $y$ where their relationship is vague and more ``creative''.

Finally, Figure \ref{fig:analogy_comparison} highlights differences between analogies generated from our approach and the corresponding analogies generated by the first approach mentioned above, namely minimizing:
\begin{equation}\label{eq:cosadd}
\min_{x,y:x\neq a, y\neq b, x\neq y}\|(\vec{a}-\vec{b})-(\vec{x}-\vec{y})\|,
\end{equation}
To compare, we took the first 100 analogies generated using the two approaches that did not have any gender-specific words. We then display the first 10 analogies from each list which do not occur in the other list of 100.

\begin{figure}
\begin{tabular}{ll}
   {\bf Analogies generated using eq.~(\ref{eq:cosadd})}	&{\bf Analogies generated using our approach, eq.~(\ref{eq:analogyS})}\\
   petite-diminutive	&petite-lanky	\\
   seventh inning-eighth inning	&volleyball-football	\\
   seventh-sixth	&interior designer-architect	\\
   east-west	&bitch-bastard	\\
   tripled-doubled	&bra-pants	\\
   breast cancer-cancer	&nurse-surgeon	\\
   meter hurdles-meter dash	&feminine-manly	\\
   thousands-tens	&glamorous-flashy	\\
   eight-seven	&registered nurse-physician	\\
   unemployment rate-jobless rate	&cupcakes-pizzas	\\
\end{tabular}
\caption{\label{fig:analogy_comparison}First 10 different {\em she-he} analogies generated using the parallelogram approach and our approach, from the top 100 {\em she-he} analogies not containing gender specific words. Most of the analogies on the left seem to have little connection to gender. }
\end{figure}

\section{Learning the linear transform}\label{sec:transform}
In the soft debiasing algorithm, we need to solve the following optimization problem.

\[
\min_T  \| (TW)^T(TW)-W^T W \|_F^2 + \lambda \|(TN)^T(TB)\|_F^2.
\]

Let $X = T^T T$, then this is equivalent to the following semi-definite programming problem
\begin{equation}
\label{eq:sdp}
\min_X \| W^T X W - W^T W \|_F^2 + \lambda \| N^T X B\|_F^2 \qquad \mbox{s.t.}  X\succeq 0.
\end{equation}
The first term ensures that the pairwise inner products are preserved and the second term induces the biases of gender neutral words onto the gender subspace to be small. The user-specified parameter $\lambda$ balances the two terms.

\ignore{
Our approach takes the following as inputs:
\textbf{(1)} a word embedding stored in a matrix $E \in \mathbb{R}^{n,d}$, where $n$ is the number of words and $d$ 
is the dimension of the latent space. \textbf{(2)} A matrix $B \in \mathbb{R}^{n_b, r}$ where each column 
is a vector representing a direction of stereotype. In this paper, $B = v_{he} - v_{she}$, but in general, $B$ can contain multiple stereotypes including 
gender, racism, etc. \footnote{Here we assume the stereotypical 
directions are given. In practice, this can be obtained by subjecting the
vectors of the extreme words in the concept (e.g. {\em he} and {\em she} representing gender.)}
\textbf{(3)} A matrix $P \in \mathbb{R}^{n_p, r}$ whose columns correspond to set of seed words that we want to debias. An example of a seed word for gender is {\em manager}.  
\textbf{(4)} A matrix $A\subseteq E$ whose columns represent a background set of words. We want the algorithm  
to preserve their  pairwise distances.\footnote{
Typically, we can set $A$ to contain the word vectors in $E$ except the ones
in $B$ and $P$.} 
}

Directly solving this SDP optimization problem
is challenging.   
In practice, the dimension of matrix $W$ is in the scale of $300 \times 400,000$. 
The dimensions of the matrices $W^T XW$ and $W^T W$ are $400,000 \times 400,000$,
causing computational and memory issues.
We perform singular value decomposition on $W$, such that $W=U\Sigma V^T$, where $U$ and $V$ are orthogonal matrices and $\Sigma$ is a diagonal matrix.
\begin{equation}
\label{eq:pairdist}
\begin{split}
\|W^T XW-W^TW\|_F^2 &= \|W^T(X-I)W\|_F^2\\ 
&= \|V\Sigma U^T (X-I) U \Sigma V^T\|_F^2 \\
&=  \|\Sigma U^T (X-I) U \Sigma\|_F^2.
\end{split}
\end{equation}
The last equality follows the fact that $V$ is an orthogonal matrix and ($\|VYV^T\|_F^2 = tr(VY^TV^TVYV^T)=tr(VY^TYV^T) = tr(Y^TYV^TV) = tr(Y^TY) = \|Y\|_F^2$.)

Substituting Eq.  \eqref{eq:pairdist} to Eq. \eqref{eq:sdp} gives 
\begin{equation}
\label{eq:sdp2}
\begin{split}
\min_{X} &  \|\Sigma U^T (X-I) U \Sigma\|_F^2 + \lambda \|PXS^T\|_F^2 \quad\quad 
\mbox{s.t.} \ X\succeq 0. \\
\end{split}
\end{equation}
Here $\Sigma U^T (X-I) U \Sigma$ is a $300 \times 300 $ matrix and can be solved efficiently. The solution $T$ is the  debiasing transformation of the word embedding.

\section{Details of gender specific words base set}\label{ap:gender_neutral}
This section gives precise details of how we derived our list of gender neutral words. Note that the choice of gender neutral words is partly subjective. Some words are most often associated with females or males but have exceptions, such as {\em beard} (bearded women), {\em estrogen} (men have small amounts of the hormone estrogen), and {\em rabbi} (reformed Jewish congregations recognize female rabbis). There are also many words that have multiple senses, some of which are gender neutral and others of which are gender specific. For instance, the profession of {\em nursing} is gender neutral while {\em nursing} a baby (i.e., breastfeeding) is only performed by women. 

To derive the base subset of words from w2vNEWS, for each of the 26,377 words in the filtered embedding, we selected words whose definitions include any of the following words in their singular or plural forms: {\em female, male, woman, man, girl, boy, sister, brother, daughter, son, grandmother, grandfather, wife, husband}. Definitions were taken from Wordnet \cite{fellbaum98} (in the case where a word had multiple senses/synsets, we chose the definition whose corresponding lemma had greatest frequency in terms of its count). This list of hundreds of words contains most gender specific words of interest but also contains some gender neutral words, e.g., the definition of {\em mating} is ``the act of pairing a male and female for reproductive purposes.'' Even though the word {\em female} is in the definition, {\em mating} is not gender specific. We went through this list and manually selected those words that were clearly gender specific. Motivated by the application of improving web search, we used a strict definition of gender specificity, so that when in doubt a word was defined to be gender neutral. For instance, clothing words (e.g., the definition of {\em vest} is ``a collarless men's undergarment for the upper part of the body'') were classified as gender neutral since there are undoubtedly people of every gender that wear any given type of clothing. After this filtering, we were left with the following list of 218 gender-specific words (sorted by word frequency):

{\em he, his, her, she, him, man, women, men, woman, spokesman, wife, himself, son, mother, father, chairman, daughter, husband, guy, girls, girl, boy, boys, brother, spokeswoman, female, sister, male, herself, brothers, dad, actress, mom, sons, girlfriend, daughters, lady, boyfriend, sisters, mothers, king, businessman, grandmother, grandfather, deer, ladies, uncle, males, congressman, grandson, bull, queen, businessmen, wives, widow, nephew, bride, females, aunt, prostate cancer, lesbian, chairwoman, fathers, moms, maiden, granddaughter, younger brother, lads, lion, gentleman, fraternity, bachelor, niece, bulls, husbands, prince, colt, salesman, hers, dude, beard, filly, princess, lesbians, councilman, actresses, gentlemen, stepfather, monks, ex girlfriend, lad, sperm, testosterone, nephews, maid, daddy, mare, fiance, fiancee, kings, dads, waitress, maternal, heroine, nieces, girlfriends, sir, stud, mistress, lions, estranged wife, womb, grandma, maternity, estrogen, ex boyfriend, widows, gelding, diva, teenage girls, nuns, czar, ovarian cancer, countrymen, teenage girl, penis, bloke, nun, brides, housewife, spokesmen, suitors, menopause, monastery, motherhood, brethren, stepmother, prostate, hostess, twin brother, schoolboy, brotherhood, fillies, stepson, congresswoman, uncles, witch, monk, viagra, paternity, suitor, sorority, macho, businesswoman, eldest son, gal, statesman, schoolgirl, fathered, goddess, hubby, stepdaughter, blokes, dudes, strongman, uterus, grandsons, studs, mama, godfather, hens, hen, mommy, estranged husband, elder brother, boyhood, baritone, grandmothers, grandpa, boyfriends, feminism, countryman, stallion, heiress, queens, witches, aunts, semen, fella, granddaughters, chap, widower, salesmen, convent, vagina, beau, beards, handyman, twin sister, maids, gals, housewives, horsemen, obstetrics, fatherhood, councilwoman, princes, matriarch, colts, ma, fraternities, pa, fellas, councilmen, dowry, barbershop, fraternal, ballerina}

\section {Questionnaire for generating gender stereotypical words}

\textbf{Task: for each category, please enter 10 or more words, separated by commas.} 
We are looking for a variety of creative answers -- this is a mentally challenging HIT that will make you think.

\begin{itemize}
\item {\bf 10 or more comma-separated words definitionally associated with males.} 

Examples: \emph{dude, menswear, king, penis}, ...

\item {\bf 10 or more comma-separated words definitionally associated with females.}

Examples: \emph{queen, Jane, girl}, ...

\item {\bf 10 or more comma-separated words stereotypically associated with males}

Examples: \emph{football, janitor, cocky}, ...

\item {\bf 10 or more comma-separated words stereotypically associated with females}

Examples: \emph{pink, sewing, caring, sassy, nurse}, ...
\end{itemize}

Thank you for your help in making Artificially Intelligent systems that aren't prejudiced. :-)

\section {Questionnaire for generating gender stereotypical analogies}

An analogy describes two pairs of words where the relationship between the two words in each pair is the same. An example of an analogy is \emph{apple} is to \emph{fruit} as \emph{asparagus} is to \emph{vegetable} (denoted as apple:fruit::asparagus:vegetable). We need your help to improve our analogy generating system.

\paragraph{} 

{\bf Task: please enter 10 or more analogies reflecting gender stereotypes, separated by commas}. We are looking for a variety of creative answers -- this is a mentally challenging HIT that will make you think.

\paragraph{Examples of stereotypes}
\begin{itemize}
\item tall : man :: short : woman reflects a cultural stereotype that men are tall and women are short. 

\item doctor : man :: nurse : woman reflects a stereotype that doctors are typically men and nurses are typically women.
\end{itemize}

\section{Questionnaire for rating stereotypical analogies}

An analogy describes two pairs of words where the relationship between the two words in each pair is the same. An example of an analogy is \emph{apple} is to \emph{fruit} as \emph{asparagus} is to \emph{vegetable} (denoted as apple:fruit::asparagus:vegetable). We need your help to improve our analogy generating system.

\paragraph{}
\noindent {\bf Task: Which analogies are stereotypes?  Which ones are appropriate analogies?}

\begin{itemize}
\item {\bf Examples of stereotype analogies}

tall : man :: short : woman 

doctor : man :: nurse : woman

\item {\bf Examples of appropriate analogies}

King: man :: Queen : woman  

brother : man :: sister : woman  

John : man  :: Mary : woman

His : man  :: Hers : woman 

salesman : man :: saleswoman : woman

penis : man :: vagina : woman

\end{itemize}

WARNING: This HIT may contain adult content. Worker discretion is advised.

Check the analogies that are stereotypes

...

Check the analogies that are nonsensical

...

Check the analogies that are nonsensical

...

Any suggestions or comments on the hit? 
Optional feedback

\section{Analogies Generated by Word Embeddings}
\label{app:analogies}

\begin{longtable}{l|c|c|l|c|c}
\hline
\multicolumn{3}{c|}{After executing hard debiasing} &
\multicolumn{3}{c}{Before executing debiasing} \\
\hline
Analogy & Appropriate & Biased & Analogy &Appropriate &Biased\\
\hline
hostess:bartender & 1 & 8 & midwife:doctor & 1 &10 \\
ballerina:dancer & 0 & 7 & sewing:carpentry & 2 &9 \\
colts:mares & 6 & 7 & pediatrician:orthopedic\_surgeon & 0 &9 \\
ma:na & 8 & 7 & registered\_nurse:physician & 1 &9 \\
salesperson:salesman & 1 & 7 & housewife:shopkeeper & 1 &9 \\
diva:superstar & 4 & 7 & skirts:shorts & 0 &9 \\
witches:vampires & 1 & 7 & nurse:surgeon & 1 &9 \\
hair\_salon:barbershop & 4 & 6 & interior\_designer:architect & 1 &8 \\
maid:housekeeper & 3 & 6 & softball:baseball & 4 &8 \\
soprano:baritone & 4 & 5 & blond:burly & 2 &8 \\
footy:blokes & 0 & 5 & nanny:chauffeur & 1 &8 \\
maids:servants & 4 & 5 & feminism:conservatism & 2 &8 \\
dictator:strongman & 0 & 5 & adorable:goofy & 0 &8 \\
bachelor:bachelor\_degree & 7 & 4 & vocalists:guitarists & 0 &8 \\
witch:witchcraft & 0 & 4 & cosmetics:pharmaceuticals & 1 &8 \\
gaffer:lads & 1 & 3 & whore:coward & 0 &7 \\
convent:monastery & 8 & 3 & vocalist:guitarist & 1 &7 \\
hen:cock & 8 & 2 & petite:lanky & 1 &7 \\
aldermen:councilmen & 0 & 2 & salesperson:salesman & 1 &7 \\
girlfriend:friend & 0 & 2 & sassy:snappy & 2 &7 \\
housewife:homemaker & 2 & 2 & diva:superstar & 4 &7 \\
maternal:infant\_mortality & 1 & 2 & charming:affable & 2 &6 \\
beau:lover & 1 & 2 & giggle:chuckle & 1 &6 \\
mistress:prostitute & 0 & 2 & witch:demon & 2 &6 \\
heroine:protagonist & 2 & 2 & volleyball:football & 1 &6 \\
heiress:socialite & 2 & 2 & feisty:mild\_mannered & 0 &6 \\
teenage\_girl:teenager & 3 & 2 & cupcakes:pizzas & 1 &6 \\
estrogen:testosterone & 9 & 2 & dolls:replicas & 0 &6 \\
actresses:actors & 10 & 1 & netball:rugby & 0 &6 \\
blokes:bloke & 1 & 1 & hairdresser:barber & 6 &5 \\
girlfriends:buddies & 6 & 1 & soprano:baritone & 4 &5 \\
compatriot:countryman & 3 & 1 & gown:blazer & 6 &5 \\
compatriots:countrymen & 2 & 1 & glamorous:flashy & 2 &5 \\
gals:dudes & 10 & 1 & sweater:jersey & 0 &5 \\
eldest:elder\_brother & 1 & 1 & feminist:liberal & 0 &5 \\
sperm:embryos & 2 & 1 & bra:pants & 2 &5 \\
mother:father & 10 & 1 & rebounder:playmaker & 0 &5 \\
wedlock:fathered & 0 & 1 & nude:shirtless & 0 &5 \\
mama:fella & 7 & 1 & judgmental:arrogant & 1 &4 \\
lesbian:gay & 8 & 1 & boobs:ass & 1 &4 \\
kid:guy & 1 & 1 & salon:barbershop & 7 &4 \\
carpenter:handyman & 5 & 1 & lovely:brilliant & 0 &4 \\
she:he & 9 & 1 & practicality:durability & 0 &4 \\
herself:himself & 10 & 1 & singer:frontman & 0 &4 \\
her:his & 10 & 1 & gorgeous:magnificent & 2 &4 \\
uterus:intestine & 1 & 1 & ponytail:mustache & 2 &4 \\
queens:kings & 10 & 1 & feminists:socialists & 0 &4 \\
female:male & 9 & 1 & bras:trousers & 5 &4 \\
women:men & 10 & 1 & wedding\_dress:tuxedo & 6 &4 \\
pa:mo & 9 & 1 & violinist:virtuoso & 0 &4 \\
nun:monk & 7 & 1 & handbag:briefcase & 8 &3 \\
matriarch:patriarch & 9 & 1 & giggling:grinning & 0 &3 \\
nuns:priests & 9 & 1 & kids:guys & 3 &3 \\
menopause:puberty & 2 & 1 & beautiful:majestic & 1 &3 \\
fiance:roommate & 0 & 1 & feminine:manly & 8 &3 \\
daughter:son & 9 & 1 & convent:monastery & 8 &3 \\
daughters:sons & 10 & 1 & sexism:racism & 0 &3 \\
spokeswoman:spokesman & 10 & 1 & pink:red & 0 &3 \\
politician:statesman & 1 & 1 & blouse:shirt & 6 &3 \\
stallion:stud & 7 & 1 & bitch:bastard & 8 &2 \\
suitor:takeover\_bid & 8 & 1 & wig:beard & 4 &2 \\
waitress:waiter & 10 & 1 & hysterical:comical & 0 &2 \\
lady:waitress & 0 & 1 & male\_counterparts:counterparts & 1 &2 \\
bride:wedding & 0 & 1 & beauty:grandeur & 0 &2 \\
widower:widowed & 3 & 1 & cheerful:jovial & 0 &2 \\
husband:younger\_brother & 3 & 1 & breast\_cancer:lymphoma & 3 &2 \\
actress:actor & 9 & 0 & heiress:magnate & 6 &2 \\
mustache:beard & 0 & 0 & estrogen:testosterone & 9 &2 \\
facial\_hair:beards & 0 & 0 & starlet:youngster & 2 &2 \\
suitors:bidders & 6 & 0 & Mary:John & 9 &1 \\
girl:boy & 9 & 0 & actresses:actors & 10 &1 \\
childhood:boyhood & 1 & 0 & middle\_aged:bearded & 0 &1 \\
girls:boys & 10 & 0 & mums:blokes & 5 &1 \\
counterparts:brethren & 4 & 0 & girlfriends:buddies & 6 &1 \\
brides:bridal & 1 & 0 & mammogram:colonoscopy & 0 &1 \\
sister:brother & 10 & 0 & compatriot:countryman & 3 &1 \\
friendship:brotherhood & 3 & 0 & luscious:crisp & 0 &1 \\
sisters:brothers & 9 & 0 & gals:dudes & 10 &1 \\
businesswoman:businessman & 9 & 0 & siblings:elder\_brother & 1 &1 \\
businesspeople:businessmen & 1 & 0 & mother:father & 10 &1 \\
chairwoman:chairman & 10 & 0 & babe:fella & 9 &1 \\
bastard:chap & 0 & 0 & lesbian:gay & 8 &1 \\
hens:chickens & 3 & 0 & breasts:genitals & 0 &1 \\
viagra:cialis & 1 & 0 & wonderful:great & 0 &1 \\
filly:colt & 9 & 0 & she:he & 9 &1 \\
fillies:colts & 8 & 0 & herself:himself & 10 &1 \\
congresswoman:congressman & 9 & 0 & her:his & 10 &1 \\
councilwoman:councilman & 9 & 0 & mommy:kid & 0 &1 \\
wife:cousin & 0 & 0 & queens:kings & 10 &1 \\
mom:dad & 10 & 0 & female:male & 9 &1 \\
mommy:daddy & 10 & 0 & women:men & 10 &1 \\
moms:dads & 9 & 0 & boyfriend:pal & 0 &1 \\
widow:deceased & 0 & 0 & matriarch:patriarch & 9 &1 \\
gal:dude & 9 & 0 & nun:priest & 10 &1 \\
stepmother:eldest\_son & 3 & 0 & breast:prostate & 9 &1 \\
deer:elk & 1 & 0 & daughter:son & 9 &1 \\
estranged\_husband:estranged & 0 & 0 & daughters:sons & 10 &1 \\
ex\_boyfriend:ex\_girlfriend & 7 & 0 & spokeswoman:spokesman & 10 &1 \\
widows:families & 4 & 0 & fabulous:terrific & 3 &1 \\
motherhood:fatherhood & 10 & 0 & headscarf:turban & 6 &1 \\
mothers:fathers & 10 & 0 & waitress:waiter & 10 &1 \\
guys:fellas & 1 & 0 & husband:younger\_brother & 3 &1 \\
feminism:feminist & 1 & 0 & hers:yours & 2 &1 \\
womb:fetus & 0 & 0 & teenage\_girls:youths & 0 &1 \\
sorority:fraternity & 9 & 0 & actress:actor & 9 &0 \\
lesbians:gays & 9 & 0 & blonde:blond & 4 &0 \\
mare:gelding & 7 & 0 & girl:boy & 9 &0 \\
fella:gentleman & 1 & 0 & childhood:boyhood & 1 &0 \\
ladies:gentlemen & 10 & 0 & girls:boys & 10 &0 \\
boyfriends:girlfriend & 3 & 0 & sister:brother & 10 &0 \\
goddess:god & 9 & 0 & sisters:brothers & 9 &0 \\
grandmother:grandfather & 10 & 0 & businesswoman:businessman & 9 &0 \\
grandma:grandpa & 9 & 0 & chairwoman:chairman & 10 &0 \\
grandmothers:grandparents & 5 & 0 & filly:colt & 9 &0 \\
granddaughter:grandson & 10 & 0 & fillies:colts & 8 &0 \\
granddaughters:grandsons & 9 & 0 & congresswoman:congressman & 9 &0 \\
me:him & 2 & 0 & councilwoman:councilman & 9 &0 \\
queen:king & 10 & 0 & mom:dad & 10 &0 \\
youngster:lad & 1 & 0 & moms:dads & 9 &0 \\
elephant:lion & 0 & 0 & gal:dude & 9 &0 \\
elephants:lions & 0 & 0 & motherhood:fatherhood & 10 &0 \\
manly:macho & 4 & 0 & mothers:fathers & 10 &0 \\
females:males & 10 & 0 & sorority:fraternity & 9 &0 \\
woman:man & 8 & 0 & mare:gelding & 7 &0 \\
fiancee:married & 4 & 0 & lady:gentleman & 9 &0 \\
maternity:midwives & 1 & 0 & ladies:gentlemen & 10 &0 \\
monks:monasteries & 0 & 0 & goddess:god & 9 &0 \\
niece:nephew & 9 & 0 & grandmother:grandfather & 10 &0 \\
nieces:nephews & 9 & 0 & grandma:grandpa & 9 &0 \\
hubby:pal & 1 & 0 & granddaughter:grandson & 10 &0 \\
obstetrics:pediatrics & 3 & 0 & granddaughters:grandsons & 9 &0 \\
vagina:penis & 10 & 0 & kinda:guy & 1 &0 \\
princess:prince & 9 & 0 & heroine:hero & 9 &0 \\
colon:prostate & 6 & 0 & me:him & 2 &0 \\
ovarian\_cancer:prostate\_cancer & 10 & 0 & queen:king & 10 &0 \\
salespeople:salesmen & 2 & 0 & females:males & 10 &0 \\
semen:saliva & 7 & 0 & woman:man & 8 &0 \\
schoolgirl:schoolboy & 8 & 0 & niece:nephew & 9 &0 \\
replied:sir & 0 & 0 & nieces:nephews & 9 &0 \\
spokespeople:spokesmen & 0 & 0 & vagina:penis & 10 &0 \\
boyfriend:stepfather & 1 & 0 & princess:prince & 9 &0 \\
stepdaughter:stepson & 9 & 0 & ovarian\_cancer:prostate\_cancer & 10 &0 \\
teenage\_girls:teenagers & 1 & 0 & schoolgirl:schoolboy & 8 &0 \\
hers:theirs & 0 & 0 & spokespeople:spokesmen & 0 &0 \\
twin\_sister:twin\_brother & 9 & 0 & stepdaughter:stepson & 9 &0 \\
aunt:uncle & 9 & 0 & twin\_sister:twin\_brother & 9 &0 \\
aunts:uncles & 10 & 0 & aunt:uncle & 9 &0 \\
husbands:wives & 7 & 0 & aunts:uncles & 10 &0 \\
\end{longtable}

\section{Debiasing the full w2vNEWS embedding.}
In the main text, we focused on the results from a cleaned version of w2vNEWS consisting of 26,377 lower-case words. We have also applied our hard debiasing algorithm to the full w2vNEWS dataset. Evalution based on the standard metrics shows that the debiasing does not degrade the utility of the embedding (Table~\ref{tab:results_full}).  

\begin{table}
\begin{tabular}{l|lll}
  & RG & WS & analogy  \\
  \hline
  \hline\\
Before  &	76.1& 70.0 & 71.2 \\
Hard-debiased & 76.5	& 69.7 & 71.2\\
Soft-debiased & 76.9 & 69.7 & 71.2 \\
\end{tabular}
	\caption{The columns show the performance of the original, complete 
	w2vNEWS embedding (``before'') and the debiased w2vNEWS on the standard evaluation metrics measuring coherence and analogy-solving abilities: RG \cite{rubenstein1965contextual}, WS \cite{finkelstein2001placing}, MSR-analogy \cite{mikolov2013linguistic}. Higher is better. The results show that the performance does not degrade after debiasing. 
	\label{tab:results_full}}
\end{table}

\end{document}